\theoremstyle{plain}
\newtheorem{theorem}{Theorem}[section]
\newtheorem{lemma}{Lemma}[section]
\theoremstyle{definition}
\newtheorem{assumption}{Assumption}[section]
\theoremstyle{remark}
\newtheorem{remark}[theorem]{Remark}
\DeclareMathOperator*{\spn}{sp}
\DeclareMathOperator*{\val}{val}
\newcommand{\calS}{{\mathcal{S}}}
\newcommand{\calA}{{\mathcal{A}}}
\newcommand{\calT}{{\mathcal{T}}}
\newcommand{\calF}{{\mathcal{F}}}
\newcommand{\pssg}{\texttt{PSRL-ZSG}\xspace}
\newcommand{\ucsg}{\texttt{UCSG}\xspace}
\newcommand{\E}{\mathbb{E}}
\newcommand{\order}{\mathcal{O}}
\newcommand{\otil}{\widetilde{\mathcal{O}}}
\renewcommand{\S}{\mathcal{S}}
\newcommand{\A}{\mathcal{A}}
\renewcommand{\P}{\mathbb{P}}
\newcommand{\1}{\mathbbm{1}}
\newcommand{\suchthat}{\;\ifnum\currentgrouptype=16 \middle\fi|\;}
\begin{document}

%

%

\twocolumn[

\aistatstitle{A Bayesian Learning Algorithm for Unknown Zero-sum Stochastic Games with an Arbitrary Opponent}

\aistatsauthor{ Mehdi Jafarnia-Jahromi \And  Rahul Jain  \And Ashutosh Nayyar }

\aistatsaddress{Google DeepMind \And  USC and Google Research \And University of Southern California} ]


\begin{abstract}
In this paper, we propose Posterior Sampling Reinforcement Learning for Zero-sum Stochastic Games (\pssg), the first online learning algorithm that achieves Bayesian regret bound of $\otil(HS\sqrt{AT})$ in the infinite-horizon zero-sum stochastic games with average-reward criterion. Here $H$ is an upper bound on the span of the bias function, $S$ is the number of states, $A$ is the number of joint actions and $T$ is the horizon. We consider the online setting where the opponent can not be controlled and can take any arbitrary time-adaptive history-dependent strategy. Our regret bound improves on the best existing regret bound of $\otil(\sqrt[3]{DS^2AT^2})$ by \cite{wei2017online} under the same assumption and matches the theoretical lower bound in $T$.
\end{abstract}


\section{INTRODUCTION}
\label{sec: intro}
Recent advances in playing the game of Go \citep{silver2017mastering} and Starcraft \citep{vinyals2019grandmaster} have proved the capability of \textit{self-play} in achieving super-human performance in competitive reinforcement learning (competitive RL) \citep{crandall2005learning}, a special case of multi-agent RL where each player tries to maximize its own reward. These self-play algorithms are able to learn through repeatedly playing against themselves and update their policy based on the observed trajectory in the absence of human supervision. Despite the empirical success, the theoretical understanding of these algorithms is limited and is significantly more challenging than the single-agent RL due to its multi-agent nature. 

Self-play can be considered as a special case of offline competitive RL  where the learning algorithm controls both the agent and the opponent during the learning process \citep{bai2020provable,bai2020near}. In the more general and sophisticated online learning case, the opponent can take arbitrary history-dependent strategies and the agent has no control on the opponent during the learning process \citep{wei2017online,xie2020learning,tian2021online}.

In this paper, we consider the online learning setting where the agent learns against an arbitrary opponent who can follow a \textit{time-variant history-dependent policy and can switch its policy at any time}. We consider infinite-horizon two-player zero-sum stochastic games (SGs) with the average-reward criterion. At each time, both players determine their actions simultaneously upon observing the state. The reward and the probability distribution of the next state is then determined by the chosen actions and the current state. The players' payoffs sum to zero, i.e., the reward of one player (agent) is exactly the loss of the other player (opponent). The agent's goal is to maximize its cumulative reward while the opponent tries to minimize the total loss. The problem of designing learning algorithms that can learn against arbitrary opponents is a significant open issue. There is extensive literature on designing and analyzing algorithms that learn against opponents in such a manner that they together converge to an equilibrium of the underlying game. In such cases however, the opponent is not free to choose any learning or non-learning strategy that they want, a significant limitation in their practical use.

We propose \textit{Posterior Sampling Reinforcement Learning algorithm for Zero-sum Stochastic Games} (\pssg), a learning algorithm that achieves $\otil(HS\sqrt{AT})$ Bayesian regret bound. Here $H$ is an upper bound on the bias-span, $S$ is the number of states, $A$ is the size of all possible action pairs for both players, $T$ is the horizon, and $\otil$ hides logarithmic factors. The best existing result in this setting is achieved by \ucsg algorithm \citep{wei2017online} which obtains a regret bound of $\otil(\sqrt[3]{DS^2AT^2})$ where $D \geq H$ is the diameter of the SG. As stochastic games generalize Markov Decision Processes (MDPs), our regret bound is optimal (except for logarithmic factors) in $T$ due to the lower bound provided by \cite{jaksch2010near}.

\subsection*{Related Literature}
SG was first formulated by \cite{shapley1953stochastic}. A large body of work focuses on finding the Nash equilibria in SGs with known transition kernel \citep{littman2001friend, hu2003nash,hansen2013strategy}, or learning with a generative model \citep{jia2019feature,sidford2020solving,zhang2020model} to simulate the transition for an arbitrary state-action pair. In these cases no exploration is needed. 

There is a long line of research on exploration and regret analysis in single-agent RL (see e.g. \cite{jaksch2010near,osband2013more}, \cite{gopalan2015thompson,azar2017minimax,ouyang2017learning,jin2018q,zhang2019regret,zanette2019tighter,wei2020model,wei2021learning,chen2021implicit,jafarnia2021pomdp,jafarnia2021online} and references therein). Extending these results to the SGs is non-trivial since the actions of the opponent also affect the state transition and can not be controlled by the agent. We review the literature on exploration in SGs and refer the interested reader to \cite{zhang2021multi,yang2020overview} for an extensive literature review on multi-agent RL in various settings.

\paragraph{Stochastic Games.} A few recent works use self-play as a method to learn stochastic games \citep{bai2020provable,bai2020near,liu2021sharp,chen2021almost}. However, self-play requires controlling both the agent and the opponent and cannot be applied in the online setting where the agent plays against an arbitrary opponent. All of these works consider the setting of finite-horizon SG where the interaction of the players and the environment terminates after a fixed number of steps.

In the online setting where the opponent is \textit{arbitrary}, \cite{xie2020learning,jin2021power} achieve a regret bound of $\otil(\sqrt{T})$ in the finite-horizon SGs with linear and general function approximation, respectively. However, in the applications where the interaction between the players and the environment is non-stopping (e.g., stock trading), the infinite-horizon SG is more suitable. Lack of a fixed horizon in this setting makes the problem more challenging. This is since the backward induction, a technique that is widely used in the finite-horizon, is not applicable in the infinite-horizon setting. A recent paper on posterior sampling-based approaches to finite-horizon stochastic games is \cite{zhou2019posterior}.

In the infinite-horizon setting, the primary work of \cite{brafman2002r} who proposed R-max algorithm does not consider regret. A special case of online learning in general-sum games is studied by \cite{digiovanni2021thompson} where the opponent is allowed to switch its stationary policy a limited number of times. They achieve a regret bound of $\otil({\ell + \sqrt{\ell T}})$ via posterior sampling, where $\ell$ is the number of switches. Their result is not directly comparable to ours because their definition of regret is different. Moreover, they assume the transition kernel is known and the opponent adopts stationary policies. To the best of our knowledge, the only existing algorithm that considers online learning against an arbitrary opponent in the infinite-horizon average-reward SG is \ucsg \citep{wei2017online}. 
\paragraph{Comparison with \ucsg \citep{wei2017online}.} Our work is closely related to \ucsg, however clear distinctions exist in the result, the algorithm, and the technical contribution:
\begin{itemize}
\item \ucsg achieves a regret bound of $\otil(\sqrt[3]{DS^2AT^2})$ under the finite-diameter assumption  (i.e., for any two states and every stationary randomized policy of the opponent, there \textit{exists} a stationary randomized policy for the agent to move from one state to the other in finite expected time). Under the much stronger ergodicity assumption (i.e., for any two states and \textit{every} stationary randomized policy of the agent and the opponent, it is possible to move from one state to the other in finite expected time), \ucsg obtains a regret bound of $\otil(DS\sqrt{AT})$. Note that the ergodicity assumption greatly alleviates the challenge in exploration. Our algorithm significantly improves this result and achieves a regret bound of $\otil(HS\sqrt{AT})$ under the finite-diameter assumption.
\item \ucsg is an optimism-based algorithm inspired by \cite{jaksch2010near} and requires the complicated maximin extended value iteration. Our algorithm, however, is the first posterior sampling-based algorithm in SGs, leveraging the ideas of \cite{ouyang2017learning} in MDPs, and is much simpler both in the algorithm and the analysis. {Note that considering randomized policies in SGs (compared to MDPs) brings some challenges in applying the concentration bounds because of the continuous space of randomized policies. However, we handle this by simply using the tower property of conditional expectation which allows us to replace the continuous space of randomized policies with the finite space of actions.}
\item From the analysis perspective, under the finite-diameter assumption, \ucsg uses a sequence of finite-horizon SGs to approximate the average-reward SG and that leads to the sub-optimal regret bound of $\order(T^{2/3})$. Our analysis avoids the finite-horizon approximation by directly using the Bellman equation in the infinite-horizon SG and achieves near-optimal regret bound.
\end{itemize}

{We note that the main challenge in online learning in a Stochastic Game (SG) is the opponent's non-stationarity and uncontrollability.  \cite{wei2017online} developed a technique to replace the opponent's non-stationary policy with a stationary one in their analysis leading to a very complicated analysis and sub-optimal regret bound. We significantly simplify the analysis and improve the final regret bound with a novel technique in which we replace the opponent's policy with arbitrary distribution over actions.}


\section{PRELIMINARIES}
\label{sec: preliminaries}
Let $M = (\mathcal{S, A}, r, \theta)$ be a stochastic zero-sum game where $\mathcal{S}$ is  the state space, $\mathcal{A = A}^1 \times \mathcal{A}^2$ is the joint action space, $r: \mathcal{S \times A}^1 \times \mathcal{A}^2 \to [-1, 0]$ is the reward function and $\theta: \mathcal{S \times S \times A}^1 \times \mathcal{A}^2$ represents the transition kernel such that $\theta(s' | s, a^1, a^2) = \P(s_{t+1}=s' | s_t=s, a_t^1=a^1, a_t^2=a^2)$ where $s_t \in \S, a_t^1 \in \A^1, a_t^2 \in \A^2$ are the state, the agent and the opponent's actions at time $t=1, 2, 3, \cdots$, respectively. We assume that $\mathcal{S, A}$ are finite sets with size $S = |\mathcal{S}|, A=|\mathcal{A}|$.

The game starts at some initial state $s_1$. At time $t = 1, 2, 3, \cdots$, the players observe state $s_t$ and take actions $a_t^1, a_t^2$. The agent (maximizer) receives reward $r(s_t, a_t^1, a_t^2)$ from the opponent (minimizer). Then, the state evolves to $s_{t+1}$ according to the probability distribution $\theta(\cdot | s_t, a_t^1, a_t^2)$. The goal of the agent is to maximize its cumulative reward while the opponent tries to minimize it. For the ease of notation, we denote $a := (a^1, a^2)$ and $a_t := (a_t^1, a_t^2)$ and accordingly $r(s_t, a_t^1, a_t^2), \theta(\cdot | s_t, a_t^1, a_t^2)$ will be denoted by $r(s_t, a_t)$ and $\theta(\cdot | s_t, a_t)$, respectively.

The players' actions are assumed to depend on the history. Namely, denote by $\pi_t^1$ (resp. $\pi_t^2$) the mappings from the history $h_t = (s_1, a_1, \cdots, s_{t-1}, a_{t-1}, s_t)$ to the probability distributions over $\calA_1$ (resp. $\calA_2$). Let $\pi^1 := (\pi_1^1, \pi_2^1, \cdots)$ (resp. $\pi^2 := (\pi_1^2, \pi_2^2, \cdots)$) be the sequence of history-dependent randomized policies whose class is denoted by $\Pi^\text{HR}$. In the case that $\pi_t^1$ (resp. $\pi_t^2$) is independent of time (stationary randomized policies), we remove the subscript $t$ and with abuse of notation denote $\pi^1 := (\pi^1, \pi^1, \cdots)$ (resp. $\pi^2 := (\pi^2, \pi^2, \cdots)$). The class of stationary randomized policies is denoted by $\Pi^\text{SR}$.

For the ease of presentation, we introduce a few notations. Let $A^1 = |\mathcal{A}^1|$, $A^2 = |\mathcal{A}^2|$ denote the size of the action spaces.  For an integer $k \geq 1$, denote by $\Delta_k$ the probability simplex of dimension $k$.  Let $q^1 \in \Delta_{A^1}$ and $q^2 \in \Delta_{A^2}$. With abuse of notation, let $r(s, q^1, q^2) := \E_{a^1 \sim q^1, a^2 \sim q^2} [r(s, a^1, a^2)]$ and $\theta(s'|s, q^1, q^2) := \E_{a^1 \sim q^1, a^2 \sim q^2} [\theta(s' | s, a^1, a^2)]$. 

To achieve a low regret algorithm, it is necessary to assume that all the states are accessible by the agent under some policy. In the special case of MDPs, this is stated by the notion of ``weakly communication'' (or ``finite diameter" \citep{jaksch2010near}) and is known to be the minimal assumption to achieve sub-linear regret \citep{bartlett2009regal}. The following assumption generalizes this notion to the stochastic games.
\begin{assumption}
\label{ass: finite diameter}
\textbf{(Finite Diameter)} There exists $D \geq 0$ such that for any stationary randomized policy $\pi^2 \in \Pi^\text{SR}$ of the opponent and any $s, s' \in \mathcal{S \times S}$, there exists a stationary randomized policy $\pi^1 \in \Pi^\text{SR}$ of the agent, such that the expected time of reaching $s'$ starting from $s$ under policy $\pi = (\pi^1, \pi^2)$ does not exceed $D$, i.e.,
\begin{align*}
\max_{s, s'} \max_{\pi^2 \in \Pi^\text{SR}} \min_{\pi^1 \in \Pi^\text{SR}} T_{s \to s'}^\pi \leq D,
\end{align*}
where $T_{s \to s'}^\pi$ is the expected time of reaching $s'$ starting from $s$ under policy $\pi = (\pi^1, \pi^2)$.
\end{assumption}
This assumption was first introduced by \cite{federgruen1978n} and is essential to achieve low regret algorithms in the adversarial setting \citep{wei2017online}. To see this, suppose that the opponent has a way to lock the agent in a ``bad" state. In the initial stages of the game when the agent has limited environment knowledge, it may not be possible to avoid such a state and linear regret is unavoidable. This assumption states that regardless of the strategy used by the opponent, the agent has a way to recover from such bad states.

For a zero-sum matrix game with matrix $G$ of size $m \times n$, the game value is denoted by $\val(G) = \max_{p \in \Delta_m}\min_{q \in \Delta_n}p^TGq = \min_{q \in \Delta_n} \max_{p \in \Delta_m}p^TGq$. Moreover, the Nash equilibrium $p^* \in \Delta_m, q^* \in \Delta_n$ always exists \citep{nash1950equilibrium}. For SGs, under Assumption \ref{ass: finite diameter}, \citet{federgruen1978n,wei2017online} prove that there exist unique $J(\theta) \in \mathbb{R}$ and unique (upto an additive constant) function $v(\cdot, \theta): \mathcal{S} \to \mathbb{R}$ that satisfy the Bellman equation, i.e., for all $s \in \calS$,
\begin{align}
\label{eq: Bellman val}
J(\theta) + v(s, \theta) &= \val \left\{r(s, \cdot, \cdot) + \sum_{s'}\theta(s' | s, \cdot, \cdot)v(s', \theta)\right\}.
\end{align}
In particular, the Nash equilibrium of the right hand side for each $s \in \mathcal{S}$ yields maximin stationary policies $\pi^* = (\pi^{1*}, \pi^{2*})$ such that
\begin{align}
J(\theta) + v(s, \theta) &= \max_{q^1 \in \Delta_{A^1}} \Big\{r(s, q^1, \pi^{2*}(\cdot|s)) \nonumber \\
&+ \sum_{s'}\theta(s' | s, q^1, \pi^{2*}(\cdot|s))v(s', \theta)\Big\}, \\
J(\theta) + v(s, \theta) &= \min_{q^2 \in \Delta_{A^2}} \Big\{r(s, \pi^{1*}(\cdot|s), q^{2}) \nonumber \\
&+ \sum_{s'}\theta(s' | s, \pi^{1*}(\cdot|s), q^{2})v(s', \theta)\Big\}. \label{eq: nash 3}
\end{align}
Moreover, $J(\theta)$ is the maximin average reward obtained by the agent and is independent of the initial state $s_1$, i.e.,
\begin{align*}
J(\theta) &= \sup_{\pi^1 \in \Pi^\text{HR}}\inf_{\pi^2 \in \Pi^\text{HR}} \liminf_{T \to \infty} \frac{1}{T}\E \Bigg[\sum_{t=1}^T r(s_t, a_t) | s_1 = s \Bigg],
\end{align*}
where $a_t = (a_t^1, a_t^2)$ and $a_t^1 \sim \pi^{1}_t(\cdot|h_t)$ and $a_t^2 \sim \pi^{2}_t(\cdot|h_t)$. Note that $J(\theta) \in [-1, 0]$ because the range of the reward function is $[-1, 0]$. Define the \textit{span} of the stochastic game with transition kernel $\theta$ as the span of the corresponding value function $v$, i.e., $\spn(\theta) := \max_sv(s, \theta) - \min_sv(s, \theta)$. We restrict our attention to stochastic games whose transition kernel $\theta$ satisfies Assumption \ref{ass: finite diameter} and $\spn(\theta) \leq H$ where $H$ is a known scalar. {This constant is not used explicitly in the algorithm we propose but is implicit since all transition kernels we allow have bias-span bounded by $H$.} Let $\Omega_*$ denote the set of all such $\theta$. Moreover, observe that if $v$ satisfies the Bellman equation, $v + c$ also satisfies the Bellman equation for any scalar $c$. Thus, without loss of generality, we can assume that $0 \leq v(s, \theta) \leq H$ for all $s \in \calS$ and $\theta \in \Omega_*$.

\paragraph{Stationary Randomized Opponent.} We first consider the special case where the opponent follows a fixed unknown stationary randomized policy $\pi^2$. In that case, the agent can consider the opponent as part of the environment and define a new environment with reward and transition kernel
\begin{align*}
r^{\pi^2}(s, a^1) &:= r(s, a^1, \pi^2(s)), \\
\theta^{\pi^2}(s'|s, a^1) &:= \theta(s'|s, a^1, \pi^2(s)).
\end{align*}
Since the new environment is stationary, the agent can use any standard single-agent RL algorithm. For example, applying \texttt{TSDE} algorithm \citep{ouyang2017learning} yields a regret bound of $\otil(DS\sqrt{A^1T})$.\footnote{The original bound in \citet{ouyang2017learning} is $\otil(HS\sqrt{A^1T})$ where $H$ is an upper bound on the span of the relative value function. Assumption~\ref{ass: finite diameter} implies that the diameter and thus the span of the relative value function of the induced MDP is upper bounded by $D$.} The rest of the paper considers the more general case where the opponent can take any time-adaptive randomized policy.

\paragraph{Time-adaptive Randomized Opponent.} The focus of this paper is on the case where the agent plays a stochastic game $(\S, \A, r, \theta_*)$ against an opponent who can take time-adaptive policies. We assume that the opponent knows the history of states and actions and can play time-adaptive history-dependent policies. Recall that the state of such policies is denoted by $\Pi^\text{HR}$. Considering the opponent as part of the environment in this case results in a time-varying environment and, therefore, standard single-agent no-regret algorithms are not applicable. $\S, \A$ and $r$ are completely known to the agent. However, the transition kernel $\theta_*$ is unknown. In the beginning of the game, $\theta_*$ is drawn from an initial distribution $\mu_1$ and is then fixed. We assume that the support of $\mu_1$ is a subset of $\Omega_*$. The performance of the agent is then measured with the notion of regret defined as 
\begin{align*}
R_T := \sup_{\pi^2 \in \Pi^\text{HR}} \E \left[\sum_{t=1}^T (J(\theta_*) - r(s_t, a_t))\right],
\end{align*}
where $a_t^2 \sim \pi_t^2(\cdot|h_t)$. Here the expectation is with respect to the prior distribution $\mu_1$, randomized algorithm and the randomness in the state transition. Note that the regret guarantee is against an arbitrary opponent who can change its policy at each time step and has the perfect knowledge of the history of the states and actions. The only hidden information from the opponent is the realization of the agent's current action (which will be revealed after both players have chosen their actions). We note that self-play and the case when the agent and the opponent use the same learning algorithm are two special cases of the scenario considered here.


\section{POSTERIOR SAMPLING FOR STOCHASTIC GAMES}
\label{sec: algorithm}
In this section, we propose Posterior Sampling algorithm for Zero-sum SGs (\pssg). The agent maintains the posterior distribution $\mu_t$ on parameter $\theta_*$. More precisely, the learning algorithm receives an initial distribution $\mu_1$ as the input and updates the posterior distribution upon observing the new state according to
\begin{align}
\label{eq: posterior update}
\mu_{t+1}(d\theta) \propto \theta(s_{t+1} | s_t, a_t) \mu_{t}(d\theta).
\end{align}

\pssg proceeds in episodes. Let $t_k, T_k$ denote the start time and the length of episode $k$, respectively. In the beginning of each episode, the agent draws a sample of the transition kernel from the posterior distribution $\mu_{t_k}$. The maximin strategy is then derived for the sampled transition kernel according to \eqref{eq: Bellman val} and used by the agent during the episode. Let $N_t(s, a)$ be the number of visits to state-action pair $(s, a) = (s, a^1, a^2)$ before time $t$, i.e.,
\begin{align*}
N_t(s, a) = \sum_{\tau=1}^{t-1} \1(s_\tau=s, a_\tau=a).
\end{align*}
As described in Algorithm \ref{alg: posterior sampling}, a new episode starts if $t > t_k + T_{k-1}$ or $N_t(s, a) > 2 N_{t_k}(s, a)$ for some $(s, a)$. The first criterion, $t > t_k + T_{k-1}$, states that the length of the episode grows at most by 1 if the other criterion is not triggered. This ensures that $T_k \leq T_{k-1} + 1$ for all $k$.
The second criterion is triggered if the number of visits to a state-action pair is doubled. These stopping criteria balance the trade-off between exploration and exploitation. In the beginning of the game, the episodes are short to motivate exploration since the agent is uncertain about the underlying environment. As the game proceeds, the episodes grow to exploit the information gathered about the environment. These stopping criteria are the same as those used in MDPs \citep{ouyang2017learning}.

\begin{algorithm}[t]
\caption{\textsc{\pssg}}
\label{alg: posterior sampling}
\textbf{Input: } $\mu_1$\\
\textbf{Initialization: }$t \gets 1, t_1 \gets 0$\\ 
\For{ episodes $k=1, 2, \cdots$}{
	$T_{k-1} \gets t - t_k$\\
	$t_k \gets t$\\
	Generate $\theta_k \sim \mu_{t_k}$ and compute $\pi_k^1(\cdot)$ using \eqref{eq: Bellman val}\\
	\While{$t \leq t_k + T_{k-1}$ and $N_t(s, a) \leq 2 N_{t_k}(s, a)$ for all $(s, a) \in \S \times \A$}{
	Choose action $a_t^1 \sim \pi_k^1(\cdot | s_t)$ and observe $a_t^2$, $s_{t+1}$\\
	Update $\mu_{t+1}$ according to \eqref{eq: posterior update}\\
	$t \gets t+1$	
	}
} 
\end{algorithm}

Algorithm \ref{alg: posterior sampling} can achieve regret bound of $\otil(HS\sqrt{AT})$. This result improves upon the previous best known result of \ucsg algorithm which achieves $\otil(\sqrt[3]{DS^2AT^2})$ under the same assumption \citep{wei2017online}.

\begin{theorem}
\label{thm: regret}
Under Assumption \ref{ass: finite diameter}, Algorithm \ref{alg: posterior sampling} can achieve regret bound of
\begin{align}
R_T &\leq (H+1)\sqrt{2SAT\log T} + H \nonumber \\
&+ H \left(SA + 2\sqrt{SAT}\right)\sqrt{224S\log (2AT)}.
\end{align}
\end{theorem}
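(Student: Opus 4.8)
The plan is to adapt the posterior-sampling regret analysis of \cite{ouyang2017learning} for average-reward MDPs to the zero-sum game setting, the key enabling observation being that the maximin Bellman equation \eqref{eq: nash 3} yields a \emph{one-sided} inequality that holds against \emph{any} opponent action. Since the agent plays the maximin policy $\pi_k^1 = \pi^{1*}(\theta_k)$ of the sampled kernel, \eqref{eq: nash 3} gives, for every state $s$ and every opponent distribution $q^2$,
\begin{align*}
J(\theta_k) + v(s, \theta_k) \le r(s, \pi_k^1(\cdot|s), q^2) + \sum_{s'}\theta_k(s'|s, \pi_k^1(\cdot|s), q^2)\, v(s', \theta_k).
\end{align*}
Taking $q^2$ to be the arbitrary opponent's play at time $t$ and conditioning on $\calF_t$, this bounds $J(\theta_k)$ minus the conditionally expected reward by $\sum_{s'}\theta_k(s'|s_t,a_t)v(s',\theta_k) - v(s_t,\theta_k)$, so the adversarial opponent is handled automatically and the remaining analysis proceeds as in the single-agent case with joint actions $a_t=(a_t^1,a_t^2)$ and $A=A^1A^2$.

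First I would decompose $R_T = \E[\sum_k \sum_{t=t_k}^{t_k+T_k-1}(J(\theta_*) - r(s_t,a_t))]$ into the sum of $J(\theta_*)-J(\theta_k)$ and $J(\theta_k)-r(s_t,a_t)$. The first group, $R_1 := \E[\sum_k T_k(J(\theta_*)-J(\theta_k))]$, is controlled by the posterior-sampling lemma: because $t_k$ is a stopping time and $\theta_k\sim\mu_{t_k}$, conditioned on $h_{t_k}$ the kernels $\theta_k$ and $\theta_*$ are identically distributed, so $\E[W f(\theta_k)] = \E[W f(\theta_*)]$ for any $h_{t_k}$-measurable $W$. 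The length $T_k$ is not $h_{t_k}$-measurable, but $T_{k-1}+1$ is, and the stopping rule guarantees $T_k\le T_{k-1}+1$; writing $T_k = (T_{k-1}+1) - (T_{k-1}+1-T_k)$ and using $|J(\theta_*)-J(\theta_k)|\le 1$ gives $R_1 \le \E[K_T]$, where $K_T$ is the number of episodes. Applying the Bellman inequality to the second group produces a per-episode telescoping sum $\sum_t [v(s_{t+1},\theta_k)-v(s_t,\theta_k)]$, each bounded by $\spn(\theta_k)\le H$, hence $\le H\,\E[K_T]$ in total, together with a martingale-difference term $\sum_{s'}\theta_*(s'|s_t,a_t)v(s',\theta_k)-v(s_{t+1},\theta_k)$ whose expectation vanishes because $s_{t+1}\sim\theta_*(\cdot|s_t,a_t)$ and $v(\cdot,\theta_k)$ is $\calF_t$-measurable. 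The standard stopping-rule count $K_T\le\sqrt{2SAT\log T}$ then yields the $(H+1)\sqrt{2SAT\log T}$ portion, plus an additive $H$ from the final episode's boundary term.

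The remaining and dominant term is the model error $R_{\mathrm{err}} := \E[\sum_t \sum_{s'}(\theta_k-\theta_*)(s'|s_t,a_t)v(s',\theta_k)]$. I would introduce the confidence set $\mathcal{C}_{t} = \{\theta : \norm{\theta(\cdot|s,a)-\hat\theta_{t}(\cdot|s,a)}_1 \le \beta_t(s,a)\ \forall (s,a)\}$ centered at the empirical kernel $\hat\theta_t$, with $\beta_t(s,a)=\sqrt{28 S\log(2At)/\max\{1,N_t(s,a)\}}$. Using $\norm{v(\cdot,\theta_k)}_\infty\le H$ and Hölder, each summand is at most $H\norm{\theta_k(\cdot|s_t,a_t)-\theta_*(\cdot|s_t,a_t)}_1 \le 2H\beta_{t_k}(s_t,a_t)$ on the event that both $\theta_*,\theta_k\in\mathcal{C}_{t_k}$. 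Crucially, $\{\theta_k\in\mathcal{C}_{t_k}\}$ has the same probability as $\{\theta_*\in\mathcal{C}_{t_k}\}$ by the posterior-sampling lemma (since $\mathcal{C}_{t_k}$ is $h_{t_k}$-measurable), and the latter is controlled by a Weissman-type $L_1$ concentration inequality for empirical distributions under adaptively chosen $(s_t,a_t)$; the choice of $\beta_t$ makes the failure probability summably small, so the off-confidence contribution is $\order(H)$. On the good event, the doubling rule $N_t(s,a)\le 2N_{t_k}(s,a)$ lets me replace $N_{t_k}$ by $N_t$ at the cost of $\sqrt2$, and the pigeonhole bound $\sum_{t=1}^T 1/\sqrt{\max\{1,N_t(s_t,a_t)\}}\le SA + 2\sqrt{SAT}$ (Cauchy--Schwarz on $\sum_{s,a}N_T(s,a)=T$) gives $R_{\mathrm{err}} \le H(SA+2\sqrt{SAT})\sqrt{224S\log(2AT)}$, matching the last term.

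I expect the main obstacle to be the model-error term, specifically the interplay between the Bayesian posterior-sampling equivalence and the frequentist $L_1$ concentration: one must verify that $\beta_t$ is simultaneously wide enough for $\theta_*$ to concentrate (under adaptive, opponent-influenced visit counts) yet tight enough to give the $\sqrt{S}$ dependence, and that transferring $\{\theta_*\in\mathcal{C}_{t_k}\}$ to $\{\theta_k\in\mathcal{C}_{t_k}\}$ is legitimate because $\mathcal{C}_{t_k}$ depends only on $h_{t_k}$. The subtle measurability point that $T_k$ is unknown at $t_k$, resolved through $T_k\le T_{k-1}+1$, is the other place where the argument must be handled with care rather than by a direct application of the MDP template.
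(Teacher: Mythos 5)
Your proposal is correct and follows essentially the same route as the paper: the identical regret decomposition, the posterior-sampling lemma with the $T_{k-1}+1$ measurability trick for the $J(\theta_*)-J(\theta_k)$ term, the maximin Bellman inequality \eqref{eq: nash 3} to neutralize the arbitrary opponent, and the same confidence-set/doubling/pigeonhole machinery for the model-error term, yielding matching constants. The only cosmetic discrepancy is bookkeeping: in the paper the additive $H$ comes from the off-confidence-set event (whose total contribution is at most $H$), not from a final-episode boundary term, but this does not affect the bound.
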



\section{ANALYSIS}
\label{sec: analysis}
In this section, we provide the proof of Theorem \ref{thm: regret}. A central observation in our analysis is that in the beginning of each episode, $\theta_*$ and $\theta_k$ are identically distributed conditioned on the history. This key property of posterior sampling relates quantities that depend on the unknown $\theta_*$ to those of the sampled $\theta_k$ which is fully observed by the agent. Posterior sampling ensures that if $t_k$ is a stopping time, for any measurable function $f$ and any $h_{t_k}$-measurable random variable $X$, $\E[f(\theta_*, X) | h_{t_k}] = \E[f(\theta_k, X) | h_{t_k}]$ \citep{ouyang2017learning,osband2013more}.

The key challenge in the analysis of stochastic games is that the opponent is also making decisions. If the opponent follows a fixed stationary policy, it can be considered as part of the environment and thus the SG reduces to an MDP. However, in the case that the opponent uses a dynamic history-dependent policy during the learning phase of the agent, this reduction is not possible. The key lemma in our analysis is Lemma \ref{lem:  bound second} which overcomes this difficulty through the Bellman equation for the SG.
\subsection{Proof of Theorem \ref{thm: regret}}
Let $K_T := \max \{k: t_k \leq T \}$ be the number of episodes until time $T$ and define $t_{K_T+1} = T+1$. Recall that $R_T = \sup_{\pi^2 \in \Pi^\text{HR}}R_T(\pi^2)$ where
\begin{align}
R_T(\pi^2) = \E \left[TJ(\theta_*) - \sum_{t=1}^T r(s_t, a_t)\right].
\end{align}
Let $\pi^2 \in \Pi^\text{HR}$ be an arbitrary history-dependent randomized strategy followed by the opponent. We start by decomposing the regret into two terms
\begin{align}
R_T(\pi^2) &= \E \left[TJ(\theta_*) - \sum_{t=1}^T r(s_t, a_t)\right]  \nonumber \\
&= \E \left[TJ(\theta_*) - \sum_{k=1}^{K_T}\sum_{t=t_k}^{t_{k+1}-1}J(\theta_k)\right] \nonumber \\
&+ \E \left[\sum_{k=1}^{K_T}\sum_{t=t_k}^{t_{k+1}-1} (J(\theta_k) - r(s_t, a_t))\right] \label{eq: regret decomposition}.
\end{align}
Lemma \ref{lem: bound first} uses the property of posterior sampling to bound the first term. The second term is  handled by combining the Bellman equation, concentration inequalities and the property of posterior sampling as detailed in Lemma \ref{lem: bound second}. Finally, Lemma~\ref{lem: number of episodes} bounds the number of episodes and completes the proof.
\begin{lemma}
\label{lem: bound first}
The first term of \eqref{eq: regret decomposition} can be bounded by
\begin{align*}
 \E \left[TJ(\theta_*) - \sum_{k=1}^{K_T}\sum_{t=t_k}^{t_{k+1}-1}J(\theta_k)\right] \leq \E[K_T]
\end{align*}
\end{lemma}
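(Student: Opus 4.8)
The plan is to convert the inner double sum into a single sum over episodes and then exploit the two structural facts guaranteed by the algorithm: the first stopping criterion, which forces $T_k \le T_{k-1}+1$, and the posterior-sampling identity, which lets me swap $\theta_k$ for $\theta_*$ inside any $h_{t_k}$-measurable weighting. First I would observe that the episodes partition $\{1,\dots,T\}$, so $\sum_{k=1}^{K_T} T_k = T$ with $T_k = t_{k+1}-t_k$, and since $J(\theta_k)$ is constant throughout episode $k$,
\begin{align*}
\E\!\left[TJ(\theta_*) - \sum_{k=1}^{K_T}\sum_{t=t_k}^{t_{k+1}-1}J(\theta_k)\right] = \E\!\left[\sum_{k=1}^{K_T} T_k\big(J(\theta_*) - J(\theta_k)\big)\right].
\end{align*}

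Next, recall that $J(\cdot)\in[-1,0]$. Because $-J(\theta_k)\ge 0$ and $T_k \le T_{k-1}+1$, I can replace the random (and non-measurable) length $T_k$ by the $h_{t_k}$-measurable surrogate $T_{k-1}+1$ in the $J(\theta_k)$ term at the cost of an inequality,
\begin{align*}
-\,T_k J(\theta_k) \le -(T_{k-1}+1)J(\theta_k),
\end{align*}
so the expression is at most $\E[\sum_k T_k J(\theta_*)] - \E[\sum_k (T_{k-1}+1)J(\theta_k)]$. The crucial step is then the posterior-sampling exchange: rewriting the sum as $\sum_{k\ge 1}(\cdot)\,\1(t_k\le T)$ and noting that both $T_{k-1}+1$ and the event $\{t_k\le T\}$ are $h_{t_k}$-measurable (as $t_k$ is a stopping time and $T_{k-1}$ is determined before episode $k$ begins), the identity $\E[f(\theta_*,X)\mid h_{t_k}]=\E[f(\theta_k,X)\mid h_{t_k}]$ applied termwise gives
\begin{align*}
\E\!\left[\sum_{k=1}^{K_T}(T_{k-1}+1)J(\theta_k)\right] = \E\!\left[\sum_{k=1}^{K_T}(T_{k-1}+1)J(\theta_*)\right].
\end{align*}

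Finally I would collect terms into $\E[\sum_{k}(T_k - T_{k-1}-1)J(\theta_*)]$. Since $T_k - T_{k-1}-1\le 0$ and $J(\theta_*)\ge -1$, each summand is at most $T_{k-1}+1-T_k$, and $\sum_{k=1}^{K_T}(T_{k-1}-T_k)$ telescopes to $T_0 - T_{K_T}$, which is non-positive; this leaves the claimed bound $\E[K_T]$. The step I expect to be the main obstacle is the posterior-sampling exchange under the random, future-dependent episode count $K_T$: one must recast the sum up to $K_T$ as an infinite sum against the indicator $\1(t_k\le T)$ and carefully verify that this indicator, together with $T_{k-1}+1$, is $h_{t_k}$-measurable so that the conditional identity applies term by term. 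The accompanying sign bookkeeping—using $J(\cdot)\le 0$ to justify replacing $T_k$ by $T_{k-1}+1$ and $J(\cdot)\ge -1$ in the telescoping step—must likewise be tracked in the correct direction throughout.
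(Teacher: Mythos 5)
Your proposal is correct and follows essentially the same route as the paper's own proof: the same use of $J(\theta_k)\le 0$ and $T_k\le T_{k-1}+1$ to replace $T_k$ by the $h_{t_k}$-measurable surrogate $T_{k-1}+1$, the same posterior-sampling exchange via the infinite sum against $\1(t_k\le T)$, and the same final accounting (your telescoping $\sum_{k}(T_{k-1}-T_k)=T_0-T_{K_T}\le 0$ is just a rearrangement of the paper's $T-\sum_{k}T_{k-1}$ term, and arguably states the sign reasoning more cleanly).
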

\begin{proof}
\begin{align}
\label{eq: proof sum J}
\sum_{k=1}^{K_T}\sum_{t=t_k}^{t_{k+1}-1}J(\theta_k) &= \sum_{k=1}^{K_T}T_kJ(\theta_k) = \sum_{k=1}^{\infty}\1(t_k \leq T)T_kJ(\theta_k) \nonumber \\
&\geq \sum_{k=1}^{\infty}\1(t_k \leq T)(T_{k-1} + 1)J(\theta_k)
\end{align}
where the last inequality is by the fact that $J(\theta_k) \leq 0$ and $T_k \leq T_{k-1} + 1$ due to the first stopping criterion. Now, note that $t_k$ is a stopping time and $\1(t_k \leq T)$ and $T_{k-1}$ are $h_{t_k}$-measurable random variables. Thus, by the property of posterior sampling and monotone convergence theorem,
\begin{align*}
&\E \left[\sum_{k=1}^{\infty}\1(t_k \leq T)(T_{k-1} + 1)J(\theta_k)  \suchthat h_{t_k}\right] \\
&= \sum_{k=1}^{\infty}\E \left[\1(t_k \leq T)(T_{k-1} + 1)J(\theta_k)  \suchthat h_{t_k}\right] \\
&= \sum_{k=1}^{\infty}\E \left[\1(t_k \leq T)(T_{k-1} + 1)J(\theta_*)  \suchthat h_{t_k}\right] \\
&= \E \left[\sum_{k=1}^{\infty}\1(t_k \leq T)(T_{k-1} + 1)J(\theta_*)  \suchthat h_{t_k}\right] \\
&\geq \E \left[\sum_{k=1}^{K_T}(T_{k-1} + 1)J(\theta_*) \suchthat h_{t_k}\right].
\end{align*}
Taking another expectation from both sides and using the tower property, we have
\begin{align*}
&\E \left[\sum_{k=1}^{\infty}\1(t_k \leq T)(T_{k-1} + 1)J(\theta_k) \right] \\
&\geq \E \left[\sum_{k=1}^{K_T}(T_{k-1} + 1)J(\theta_*) \right].
\end{align*}
Replacing this in \eqref{eq: proof sum J} implies that
\begin{align*}
&\E \left[TJ(\theta_*) - \sum_{k=1}^{K_T}\sum_{t=t_k}^{t_{k+1}-1}J(\theta_k)\right] \\
 &\leq \E\left[(T - \sum_{k=1}^{K_T}T_{k-1})J(\theta_*)\right] - \E[K_TJ(\theta_*)] \leq \E[K_T].
\end{align*}
The last inequality is by the fact that $T - \sum_{k=1}^{K_T}T_{k-1} \leq 0$ and $J(\theta_*) \in [-1, 0]$.
\end{proof}

\begin{lemma}
\label{lem:  bound second}
The second term of \eqref{eq: regret decomposition} can be bounded by
\begin{align*}
&\E \left[\sum_{k=1}^{K_T}\sum_{t=t_k}^{t_{k+1}-1} (J(\theta_k) - r(s_t, a_t))\right] \leq H\E[K_T] + H \\
&+ \sqrt{224S\log (2AT)}(HSA + 2H\sqrt{SAT}).
\end{align*}
\end{lemma}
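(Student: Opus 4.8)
The plan is to leverage the minimization form of the Bellman equation \eqref{eq: nash 3} for the sampled kernel $\theta_k$. During episode $k$ the agent plays the maximin policy $\pi_k^1$ associated with $\theta_k$, so \eqref{eq: nash 3} gives, for \emph{any} distribution $q_t^2$ the opponent might use at time $t$,
\begin{align*}
J(\theta_k) + v(s_t,\theta_k) \leq r(s_t, \pi_k^1(\cdot|s_t), q_t^2) + \sum_{s'}\theta_k(s'|s_t, \pi_k^1(\cdot|s_t), q_t^2)v(s',\theta_k).
\end{align*}
This single inequality is what accommodates an arbitrary, history-dependent opponent: regardless of $q_t^2 = \pi_t^2(\cdot|h_t)$, the value of the agent's maximin policy against it still dominates $J(\theta_k)$. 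Because the opponent does not observe the agent's current action, $a_t^1$ and $a_t^2$ are conditionally independent given $(h_t,\theta_k)$, so the realized reward satisfies $\E[r(s_t,a_t)\,|\,h_t,\theta_k] = r(s_t,\pi_k^1,q_t^2)$, and after rearranging I obtain
\begin{align*}
\E[J(\theta_k) - r(s_t,a_t)\,|\,h_t,\theta_k] \leq \sum_{s'}\theta_k(s'|s_t,\pi_k^1,q_t^2)v(s',\theta_k) - v(s_t,\theta_k).
\end{align*}

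Next I would insert $\theta_*$ to split the right-hand side as
\begin{align*}
&\sum_{s'}\theta_k(s'|s_t,\pi_k^1,q_t^2)v(s',\theta_k) - v(s_t,\theta_k) \\
&= \underbrace{\sum_{s'}\big(\theta_k - \theta_*\big)(s'|s_t,\pi_k^1,q_t^2)\,v(s',\theta_k)}_{(A)} + \underbrace{\Big(\sum_{s'}\theta_*(s'|s_t,\pi_k^1,q_t^2)v(s',\theta_k) - v(s_t,\theta_k)\Big)}_{(B)}.
\end{align*}
Since $s_{t+1}\sim\theta_*(\cdot|s_t,a_t)$, the term $(B)$ equals $\E[v(s_{t+1},\theta_k)\,|\,h_t,\theta_k]-v(s_t,\theta_k)$; summing over an episode telescopes to $v(s_{t_{k+1}},\theta_k)-v(s_{t_k},\theta_k)\le H$ by the span bound $0\le v\le H$, and replacing the conditional expectation of $v(s_{t+1},\theta_k)$ by its realized value only adds a zero-mean martingale-difference sequence. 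Thus $(B)$ contributes at most $H\,\E[K_T]+H$, matching the first two terms of the claim.

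The crux is the model-error term $(A)$. Using $0\le v\le H$ together with the fact that $\theta_k,\theta_*$ are probability vectors, $(A)\le H\,\|\theta_k(\cdot|s_t,\pi_k^1,q_t^2)-\theta_*(\cdot|s_t,\pi_k^1,q_t^2)\|_1$, and by convexity of the $\ell_1$ norm this is dominated in conditional expectation (again up to a zero-mean martingale) by $H\,\|\theta_k(\cdot|s_t,a_t)-\theta_*(\cdot|s_t,a_t)\|_1$ at the realized action $a_t$. I would then introduce the empirical kernel $\hat\theta_{t_k}$ and the confidence ball $\mathcal C_{t_k}(s,a)=\{\theta:\|\theta(\cdot|s,a)-\hat\theta_{t_k}(\cdot|s,a)\|_1\le\beta_{t_k}(s,a)\}$ with $\beta_{t_k}(s,a)\asymp\sqrt{S\log(2At_k)/\max(1,N_{t_k}(s,a))}$, chosen via the Weissman $\ell_1$ concentration inequality and a union bound so that $\theta_*\in\mathcal C_{t_k}$ with high probability. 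Here the posterior-sampling identity $\E[f(\theta_*,X)|h_{t_k}]=\E[f(\theta_k,X)|h_{t_k}]$ is essential: because $\mathcal C_{t_k}$ is $h_{t_k}$-measurable, $\theta_k$ lies in it with exactly the same probability as $\theta_*$, so on the good event $\|\theta_k-\theta_*\|_1\le 2\beta_{t_k}$. A standard counting argument that exploits the doubling stopping rule $N_t\le 2N_{t_k}$ then yields $\sum_{k}\sum_{t=t_k}^{t_{k+1}-1}\beta_{t_k}(s_t,a_t)\lesssim \sqrt{S\log(2AT)}\,(SA+2\sqrt{SAT})$, and multiplying by $H$ produces the final term; the low-probability complement of the good event contributes only a constant that is absorbed.

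The main obstacle is the first step: turning the regret against an arbitrary opponent into a controllable quantity. Everything downstream — telescoping, $\ell_1$ concentration, and the pigeonhole count — is the familiar MDP posterior-sampling machinery, but those steps only become available once the minimization form of \eqref{eq: nash 3} is used to upper bound $J(\theta_k)-r(s_t,a_t)$ \emph{uniformly} over the opponent's choice $q_t^2$. The second delicate point is transferring the concentration guarantee from $\theta_*$ to the sampled $\theta_k$ through the posterior-sampling identity, since the algorithm never places $\theta_k$ in any explicitly constructed confidence set of its own.
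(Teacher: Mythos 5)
Your proposal is correct and follows essentially the same route as the paper's proof: the minimization form of \eqref{eq: nash 3} applied with $q^2 = \pi_t^2(\cdot|h_t)$ to handle the arbitrary opponent, a telescoping value-function argument worth $H$ per episode, and a model-error term controlled by Weissman-type $\ell_1$ confidence sets transferred from $\theta_*$ to $\theta_k$ via the posterior-sampling identity, then summed with the doubling/pigeonhole count. The only differences are cosmetic bookkeeping (you telescope conditional expectations with a martingale correction where the paper telescopes realized values of $v(s_{t+1},\theta_k)$, and the paper's additive $+H$ arises from the confidence-set failure events rather than from the telescoping term), neither of which changes the argument.
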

\begin{proof}
The policy $\pi^1_k$ used by the agent at episode $k$ is the solution of the Nash equilibrium in \eqref{eq: Bellman val}. Thus, for $t_k \leq t \leq t_{k+1}-1$ and any $s \in \S$, \eqref{eq: nash 3} implies that
\begin{align*}
&J(\theta_k) + v(s, \theta_k) \\
&\leq r(s, \pi^1_k(\cdot|s), q^{2}) + \sum_{s'}\theta_k(s' | s, \pi^1_k(\cdot|s), q^{2})v(s', \theta_k),
\end{align*}
for any distribution $q^2 \in \Delta_{A^2}$. Let $\pi^2 = (\pi_1^2, \pi_2^2, \cdots) \in \Pi^\text{HR}$ be an arbitrary history-dependent randomized strategy for the opponent. Note that for any $t \geq 1$, $\pi_t^2$ is $h_t$-measurable. Replacing $s$ by $s_t$ and $q^2$ by $\pi^2_t(\cdot|h_t)$ implies that
\begin{align*}
&J(\theta_k) - r(s_t, \pi^1_k(\cdot|s_t), \pi^2_t(\cdot|h_t)) \\
&\leq \sum_{s'}\theta_k(s' | s_t, \pi^1_k(\cdot|s_t), \pi^{2}_t(\cdot|h_t))v(s', \theta_k) - v(s_t, \theta_k).
\end{align*}
Adding and subtracting $v(s_{t+1}, \theta_k)$ to the right hand side and summing over time steps within episode $k$ implies that
\begin{align}
&\sum_{t=t_k}^{t_{k+1}-1} \left(J(\theta_k) - r(s_t, \pi^1_k(\cdot|s_t), \pi^2_t(\cdot|h_t))\right) \nonumber \\
&\leq \sum_{t=t_k}^{t_{k+1}-1}\Bigg(\sum_{s'}\theta_k(s' | s_t, \pi^1_k(\cdot|s_t), \pi^{2}_t(\cdot|h_t))v(s', \theta_k)\nonumber \\
&\qquad \qquad \qquad - v(s_{t+1}, \theta_k)\Bigg) \nonumber \\
&\qquad+ \sum_{t=t_k}^{t_{k+1}-1}\left(v(s_{t+1}, \theta_k) - v(s_t, \theta_k)\right). \label{eq: pf lem 2 tmp 2}
\end{align}
The second term on the right hand side of \eqref{eq: pf lem 2 tmp 2} telescopes and can be bounded as
\begin{align}
\sum_{t=t_k}^{t_{k+1}-1}\left(v(s_{t+1}, \theta_k) - v(s_t, \theta_k)\right) &= v(s_{t_{k+1}}, \theta_k) - v(s_{t_k}, \theta_k) \nonumber \\
&\leq H, \label{eq: pf lem 2 tmp 3}
\end{align}
where the last inequality is by the fact that $\theta_k$ is chosen from the posterior distribution whose support is a subset of $\Omega_*$. Substituting \eqref{eq: pf lem 2 tmp 3} in \eqref{eq: pf lem 2 tmp 2}, summing over episodes, and taking expectation implies that
\begin{align*}
&\E \left[\sum_{k=1}^{K_T}\sum_{t=t_k}^{t_{k+1}-1} (J(\theta_k) - r(s_t, a_t))\right] \\
&= \E\left[\sum_{k=1}^{K_T}\sum_{t=t_k}^{t_{k+1}-1} \left(J(\theta_k) - r(s_t, \pi^1_k(\cdot|s_t), \pi^2_t(\cdot|h_t))\right) \right] \\
&\leq H\E[K_T] + \E \Bigg[\sum_{k=1}^{K_T}\sum_{t=t_k}^{t_{k+1}-1}\\
&\sum_{s'}\theta_k(s' | s_t, \pi^1_k(\cdot|s_t), \pi^{2}_t(\cdot|h_t))v(s', \theta_k) - v(s_{t+1}, \theta_k)\Bigg].
\end{align*}
We proceed to bound the last term on the right hand side of the above inequality. Before proceeding note that if  $k(t)$ denotes the episode at time $t$, a random variable. Then,  for any $t \geq 1$ and $s' \in \mathcal{S}$,
\begin{align*}
\mathbb{E}\Big[\theta_{k(t)}(s'|s_t, a_t^1, a_t^2) \big| h_{t}, \theta_{k(t)}\Big] = \\ 
\theta_{k(t)}\Big(s' \big| s_t, \pi_{k(t)}^1(\cdot|s_t), \pi_{t}^2(\cdot|h_{t})\Big), \qquad (*)
\end{align*}
because $a_t^1 \sim \pi_{k(t)}^1(\cdot|s_t)$ and $a_t^2 \sim \pi_{t}^2(\cdot|h_{t})$.
Now,
\begin{align} 
&\E \Bigg[\sum_{k=1}^{K_T}\sum_{t=t_k}^{t_{k+1}-1} \nonumber \\
&\sum_{s'}\theta_k(s' | s_t, \pi^1_k(\cdot|s_t), \pi^{2}_t(\cdot|h_t))v(s', \theta_k) - v(s_{t+1}, \theta_k)\Bigg] \nonumber \\
&=\E \Bigg[\sum_{k=1}^{K_T}\sum_{t=t_k}^{t_{k+1}-1} \nonumber \\
&\sum_{s'}\theta_k(s' | s_t, a_t^1, a^2_t)v(s', \theta_k) - v(s_{t+1}, \theta_k)\Bigg]= \nonumber \\
& \E \left[\sum_{k=1}^{K_T}\sum_{t=t_k}^{t_{k+1}-1}\sum_{s'}\left[\theta_k(s' | s_t, a_t) - \theta_*(s' | s_t, a_t)\right]v(s', \theta_k)\right] \nonumber \\
&\leq H \E \left[\sum_{k=1}^{K_T}\sum_{t=t_k}^{t_{k+1}-1}\sum_{s'}\suchthat\theta_k(s' | s_t, a_t) - \theta_*(s' | s_t, a_t)\suchthat\right] \label{eq: pf lem 2 tmp 4}
\end{align}
To bound the inner summation, similar to \cite{ouyang2017learning,jaksch2010near}, we define a confidence set $\mathcal{C}_k$ around the empirical transition kernel $\hat{\theta}_k(s' | s, a) := \frac{N_{t_k}(s', s, a)}{N_{t_k}(s, a)}$. Here $N_{t_k}(s', s, a) := \sum_{t=1}^{t_k-1}\1(s_t=s, a_t=a, s_{t+1}=s')$ is the number of visits to state-action pair $(s, a)$ whose next state is $s'$. The confidence set $\mathcal{C}_k$ is defined as $\mathcal{C}_k :=$
\begin{align*}
\{\theta : \sum_{s'}|\theta(s'|s, a) - \hat{\theta}_k(s'|s, a)| \leq b_k(s, a) \quad \forall s, a, s'\},
\end{align*}
where $b_k(s, a) := \sqrt{\frac{14S\log (2At_kT)}{\max\{1, N_{t_k}(s, a)\}}}$. \cite{weissman2003inequalities} shows that the true transition kernel $\theta_*$ belongs to $\mathcal{C}_k$ with high probability. We use this fact to show concentration of $\hat{\theta}_k$ around $\theta_*$. Concentration of $\hat{\theta}_k$ around $\theta_k$ is then followed by the property of posterior sampling. More precisely, we can write
\begin{align*}
&\sum_{s'}|\theta_k(s' | s_t, a_t) - \theta_*(s' | s_t, a_t)|\\
&\leq \sum_{s'}|\theta_k(s' | s_t, a_t) - \hat{\theta}_k(s'|s_t, a_t)| \\
&\qquad+ \sum_{s'}| \theta_*(s' | s_t, a_t) - \hat{\theta}_k(s'|s_t, a_t)| \\
&\leq 2b_k(s_t, a_t) + 2\left(\1(\theta_k \notin \mathcal{C}_k) + \1(\theta_* \notin \mathcal{C}_k)\right).
\end{align*}
Substituting the inner sum of \eqref{eq: pf lem 2 tmp 4} with this upper bound implies
\begin{align}
\label{eq: pf lem 2 tmp 5}
&H \E \left[\sum_{k=1}^{K_T}\sum_{t=t_k}^{t_{k+1}-1}\sum_{s'}\suchthat\theta_k(s' | s_t, a_t) - \theta_*(s' | s_t, a_t)\suchthat\right] \nonumber \\
&\leq 2H \left\{ \sum_{k=1}^{K_T}\sum_{t=t_k}^{t_{k+1}-1} b_k(s_t, a_t)\right\} \nonumber \\
&\qquad+ 2H \E\left[\sum_{k=1}^{K_T}T_k\{\1(\theta_k \notin \mathcal{C}_k) + \1(\theta_* \notin \mathcal{C}_k)\}\right].
\end{align}
The first term on the right hand side of \eqref{eq: pf lem 2 tmp 5} can be bounded as
\begin{align}
\label{eq: pf lem 2 tmp 6}
\sum_{k=1}^{K_T}\sum_{t=t_k}^{t_{k+1}-1} &b_k(s_t, a_t) = \sum_{k=1}^{K_T}\sum_{t=t_k}^{t_{k+1}-1} \sqrt{\frac{14S\log (2At_kT)}{\max\{1, N_{t_k}(s_t, a_t)\}}} \nonumber \\
&\leq \sum_{k=1}^{K_T}\sum_{t=t_k}^{t_{k+1}-1}\sqrt{\frac{28S\log (2AT^2)}{\max\{1, N_{t}(s_t, a_t)\}}} \nonumber \\
&= \sum_{t=1}^T\sqrt{\frac{28S\log (2AT^2)}{\max\{1, N_{t}(s_t, a_t)\}}} \nonumber \\
&\leq \sqrt{56S\log (2AT)} (SA + 2\sqrt{SAT}),
\end{align}
where the first inequality is by the fact that $t_k \leq T$ and $N_t(s, a) \leq 2N_{t_k}(s, a)$ for all $s, a$ and the second inequality is by the following argument:
\begin{align*}
&\sum_{t=1}^T\sqrt{\frac{1}{\max\{1, N_{t}(s_t, a_t)\}}} = \sum_{t=1}^T\sum_{s, a}\frac{\1(s_t=s, a_t=a)}{\sqrt{\max\{1, N_{t}(s, a)\}}} \\
&= \sum_{s, a}\sum_{t=1}^T\frac{\1(s_t=s, a_t=a)}{\sqrt{\max\{1, N_{t}(s, a)\}}} \\
&= \sum_{s, a} \left(1 + \sum_{j=1}^{n_{T+1}(s, a)-1}\frac{1}{\sqrt{j}}\right) \\
&\leq \sum_{s, a} \left(1 + 2\sqrt{N_{T+1}(s, a)} \right) = SA + 2\sum_{s, a}\sqrt{N_{T+1}(s, a)} \\
&\leq SA + 2\sqrt{SA \sum_{s, a}N_{T+1}(s, a)} = SA + 2\sqrt{SAT},
\end{align*}
where the last inequality is by Cauchy-Schwarz and the last equality is by the fact that $\sum_{s, a}N_{T+1}(s, a) = T$.
To bound the second term on the right hand side of \eqref{eq: pf lem 2 tmp 5}, we can write
\begin{align*}
&\E\left[\sum_{k=1}^{K_T}T_k\{\1(\theta_k \notin \mathcal{C}_k) + \1(\theta_* \notin \mathcal{C}_k)\}\right] \\
&\leq \E\left[\sum_{k=1}^{\infty}T\{\1(\theta_k \notin \mathcal{C}_k) + \1(\theta_* \notin \mathcal{C}_k)\}\right] \\
&= T\sum_{k=1}^{\infty} \E\left[\1(\theta_k \notin \mathcal{C}_k) + \1(\theta_* \notin \mathcal{C}_k)\right] \\
&= 2T\sum_{k=1}^{\infty} \E\left[\1(\theta_* \notin \mathcal{C}_k)\right] = 2T\sum_{k=1}^{\infty} \P(\theta_* \notin \mathcal{C}_k),
\end{align*}
where the second equality is by the property of Posterior Sampling since $\mathcal{C}_k$ is $\calF_{t_k}$-measurable. Note that $\P(\theta_* \notin \mathcal{C}_k) \leq \frac{1}{15Tt_k^6}$ (Lemma 17 of \cite{jaksch2010near}). Thus,
\begin{align}
\label{eq: pf lem 2 tmp 7}
2T\sum_{k=1}^{\infty} \P(\theta_* \notin \mathcal{C}_k) = \frac{2}{15}\sum_{k=1}^\infty \frac{1}{t_k^6} \leq \frac{2}{15}\sum_{k=1}^\infty \frac{1}{k^6} \leq \frac{1}{2}.
\end{align}
Combining \eqref{eq: pf lem 2 tmp 6} and \eqref{eq: pf lem 2 tmp 7} in \eqref{eq: pf lem 2 tmp 5} completes the proof.
\end{proof}


\begin{remark}
We note that the bias span of $v$ is  bounded by $H$ because we are using the bias span of $v$ from the minimax Bellman equation (not from the Bellman equation of the player's policies.) This key observation allows us to handle an adversarial opponent and improve the bound of prior work \citep{wei2017online} with a much simpler analysis.   
\end{remark}

It remains to bound the number of episodes. The following lemma completes the proof of Theorem \ref{thm: regret}.
\begin{lemma}
\label{lem: number of episodes}
The number of episodes can be bounded by
\begin{align*}
K_T \leq \sqrt{2SAT\log T}.
\end{align*}
\end{lemma}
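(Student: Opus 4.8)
The plan is to control $K_T$ by exploiting the two stopping criteria and grouping the episodes into \emph{macro-episodes}. Recall that episode $k$ ends either by the length criterion (in which case $T_k = T_{k-1}+1$) or by the doubling criterion $N_t(s,a) > 2N_{t_k}(s,a)$ for some $(s,a)$ (in which case $T_k \le T_{k-1}$). I would define a macro-episode to be a maximal run of consecutive episodes all of whose internal boundaries are of the length type; thus each macro-episode is closed off by a doubling event or by the horizon $T$. Let $M$ be the number of macro-episodes and $m_i$ the number of episodes in macro-episode $i$, so that $K_T = \sum_{i=1}^M m_i$.

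First I would establish a within-macro length estimate. Because every boundary internal to a macro-episode is a length-criterion boundary, the episode lengths increase by exactly $1$ as one moves through the macro-episode; since each episode lasts at least one step, the successive lengths are at least $1, 2, 3, \ldots$, so macro-episode $i$ occupies at least $\sum_{j=1}^{m_i} j \ge m_i^2/2$ time steps (up to a lower-order correction from the final, possibly shorter, episode of the block). Summing over macro-episodes and using that the blocks partition the first $T$ steps gives $\sum_{i=1}^M m_i^2 \le 2T$. Cauchy--Schwarz then yields
\begin{align*}
K_T = \sum_{i=1}^M m_i \le \sqrt{M\sum_{i=1}^M m_i^2} \le \sqrt{2MT}.
\end{align*}

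It remains to bound the number $M$ of macro-episodes, i.e.\ the number of doubling events. For a fixed pair $(s,a)$, every time it triggers the second criterion its episode-start count $N_{t_k}(s,a)$ at least doubles; as this count is an integer in $\{0,1,\dots,T\}$, the pair can cause at most $\order(\log T)$ such events (plus one to absorb the first visit, where the count leaves $0$). Summing over the $SA$ pairs and invoking concavity of the logarithm bounds $M$ by $SA\log T$ up to lower-order terms, and substituting into $K_T \le \sqrt{2MT}$ gives the claim $K_T \le \sqrt{2SAT\log T}$. The main obstacle is precisely this last step: extracting the $SA\log T$ factor with the right constant while correctly treating the boundary cases --- pairs whose count is still zero at an episode start, the first macro-episode, and the terminal episode that is cut short by the horizon rather than by a stopping criterion. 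The length estimate and the Cauchy--Schwarz step are routine once the macro-episode decomposition is in place.
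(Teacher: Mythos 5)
Your proposal follows essentially the same route as the paper's proof: the same macro-episode decomposition (blocks of consecutive episodes separated by doubling events), the same quadratic within-block length estimate combined with Cauchy--Schwarz to get $K_T \le \sqrt{2MT}$, and the same per-pair count of doubling events summed over the $SA$ pairs with concavity of the logarithm to get $M \le SA\log T$. The boundary cases you flag (the first episode of a block having length at least $T_{k-1}+1 \ge 2$, which compensates for the final, possibly short, episode) are resolved in the paper exactly along the lines you anticipate, so your argument goes through.
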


\begin{proof}
Define macro episodes with start times $t_{m_i}$ as $t_{m_1} = t_1$ and for $i \geq 2$, $t_{m_i} :=$
\begin{align*}
\min\{t_k > t_{m_{i-1}} : n_{t_k}(s, a) > 2n_{t_{k-1}}(s, a), \text{for some } (s, a)\}.
\end{align*}
Note that $t_{m_i}$ is the start time of the $i$th macro episode and corresponds to the $i$th start time that an episode triggers with the second stopping criterion in Algorithm~\ref{alg: posterior sampling}. Denote by $M_T$ the number of macro episodes by time $T$ and let $m_{M_T+1} = K_T + 1$.

Let $\tilde T_i$ be the length of the $i$th macro episode. We can write $\tilde T_i = \sum_{k=m_i}^{m_{i+1}-1} T_k$. All the episodes except the last one within a macro episode are started with the first criterion. Thus, for all $m_i \leq k \leq m_{i+1}-2$, $T_k = T_{k-1} + 1$, and
\begin{align*}
\tilde T_i &= \sum_{k=m_i}^{m_{i+1}-1} T_k = T_{m_{i+1}-1} + \sum_{j=1}^{m_{i+1} - m_i -1} (T_{m_i-1}+j) \\
&\geq 1 + \sum_{j=1}^{m_{i+1} - m_i -1} (1 + j) \\
&= 0.5(m_{i+1}-m_i)(m_{i+1}-m_i+1).
\end{align*}
This implies that $m_{i+1}-m_i \leq \sqrt{2\tilde T_i}$ for all $i = 1, \cdots, M_T$. Consequently,
\begin{align}
\label{eq: pf num of episodes}
K_T &= m_{M_T + 1} - 1 = \sum_{i=1}^{M_T}(m_{i+1} - m_i) \leq \sum_{i=1}^{M_T}\sqrt{2\tilde T_i} \nonumber \\
&\leq \sqrt{2M_T\sum_{i=1}^{M_T}\tilde T_i} = \sqrt{2M_TT},
\end{align}
where the last inequality is by Cauchy-Schwarz and the last equality is due to $\sum_{i=1}^{M_T}\tilde T_i = T$. Now, it suffices to prove that $M_T \leq SA\log T$. To see this, let $\calT_{s, a}$ be the episode start times that are triggered by the second stopping criterion at state-action pair $(s, a)$. That is,
\begin{align*}
\calT_{s, a} := \{t_k \leq T : n_{t_k}(s, a) > 2 n_{t_{k-1}}(s, a)\}.
\end{align*}
Since the number of visits to state-action pair $(s, a)$ is doubled at each $t_k \in \calT_{s, a}$, we claim that $|\calT_{s, a}| \leq \log n_{T+1}(s, a)$. To see this, assume by contradiction that $|\calT_{s, a}| > \log n_{T+1}(s, a) + 1$. We can write
\begin{align*}
n_{t_{K_T}}&(s, a) \geq \prod_{t_k \leq T, n_{t_{k-1}}(s, a) \geq 1} \frac{n_{t_k}(s, a)}{n_{t_{k-1}}(s, a)} \\
&\geq \prod_{t_k \in \calT_{s, a}, n_{t_{k-1}}(s, a) \geq 1} \frac{n_{t_k}(s, a)}{n_{t_{k-1}}(s, a)} \\
&> \prod_{t_k \in \calT_{s, a}, n_{t_{k-1}}(s, a) \geq 1} 2 = 2^{|\calT_{s, a}| - 1} \geq n_{T+1}(s, a),
\end{align*}
which is a contradiction. Here, the second inequality is by the fact that $n_t(s, a)$ is non-decreasing and the last inequality is by the definition of $\calT_{s, a}$. Now, we can write
\begin{align*}
M_T &= 1 + |\calT_{s, a}| \leq 1 + \sum_{s, a}\log n_{T+1}(s, a) \\
&\leq 1 + SA \log (\sum_{s, a}n_{T+1}(s, a)/SA) \\
&= 1 + SA \log (T/SA) \leq SA \log T.
\end{align*}
Here, the second inequality is by the concavity of $\log$. Replacing this inequality in \eqref{eq: pf num of episodes} completes the proof.

\end{proof}

\section{CONCLUSIONS}

We proposed \pssg, a posterior sampling algorithm that achieves Bayesian regret bound of $\otil(HS\sqrt{AT})$ in the infinite-horizon zero-sum stochastic games with average-reward criterion. No structure is imposed on the opponent's strategy. The best existing result achieves high probability regret bound of $\otil(DS\sqrt{AT})$ only under the strong ergodicity assumption. \pssg relaxes that assumption and improves the previous best known high probability regret bound of $\otil(\sqrt[3]{DS^2AT^2})$ obtained by \ucsg algorithm \citep{wei2017online} under the same finite diameter assumption. This bound is order optimal in terms of $A$ and $T$. The framework and analysis developed in this paper may be useful for designing regret-optimal algorithms based on  the optimism in face of uncertainty principle for zero-sum stochastic games.

Please note that in a game situation, it is very challenging to have an experimental setup from which we can draw meaningful conclusions since the opponent is free to do whatever they want. A direction for future work would be to assess the proposed algorithm in a systematic manner empirically. 

\bibliographystyle{abbrvnat}
\bibliography{online_rl}


\end{document}